\newcommand{\IDP}{\textsc{IDP}\xspace}
\newcommand{\ZT}{\textsc{Z3}\xspace}
\newcommand{\DMN}{\textsc{DMN}\xspace}
\newcommand{\EDMN}{\textsc{eDMN}\xspace}
\newcommand{\OEL}{\textsc{OEL}\xspace}
\newcommand{\FODOT}{\textsc{FO(.)}\xspace}
\newcommand{\IDPZT}{\textsc{\IDP-\ZT}\xspace}
\newcommand{\EBD}{\textit{ebd}\xspace}
\newtheorem{definition}{Definition}
\newtheorem{theorem}{Theorem}
\newtheorem{corollary}{Corollary}[theorem]
\newcommand{\ignore}[1]{}
\newcommand{\struct}{\mathfrak{A}}
\newcommand{\voc}{\Sigma}
\newcommand{\PCvoc}{\hat{\voc}}
\newcommand{\strclass}[1]{\mathcal{C}(#1)}
\newcommand{\pow}[1]{\mathcal{P}(#1)}
\newcommand{\dmntr}[1]{\mathcal{T}(#1)}
\DeclareMathOperator*{\Fagg}{fagg}
\DeclareMathOperator*{\Fopt}{fopt}
\begin{document}

\title{An epistemic logic for modeling decisions in the context of incomplete knowledge}

\titlenote{This research received funding from the Flemish Government under the ``Onderzoeksprogramma Artifici\"ele Intelligentie
(AI) Vlaanderen'' programme.}
  
\renewcommand{\shorttitle}{An epistemic logic for modeling decisions in the context of incomplete knowledge}

\author{\DJ or\dj e Markovi\'c}
\orcid{1234-5678-9012}
\affiliation{%
  \institution{KU Leuven, Belgium}
  \streetaddress{Celestijnenlaan 200A}
  \city{} 
  \country{}
}
\email{dorde.markovic@kuleuven.be}

\author{Simon Vandevelde}
\orcid{1234-5678-9012}
\affiliation{%
  \institution{KU Leuven, Belgium}
  \streetaddress{Celestijnenlaan 200A}
  \city{} 
  \country{}
}
\email{s.vandevelde@kuleuven.be}

\author{Linde Vanbesien}
\orcid{1234-5678-9012}
\affiliation{%
  \institution{KU Leuven, Belgium}
  \streetaddress{Celestijnenlaan 200A}
  \city{} 
  \country{}
}
\email{linde.vanbesien@kuleuven.be}

\author{Joost Vennekens}
\orcid{1234-5678-9012}
\affiliation{%
  \institution{KU Leuven, Belgium}
  \streetaddress{Celestijnenlaan 200A}
  \city{} 
  \country{}
}
\email{joost.vennekens@kuleuven.be}

\author{Marc Denecker}
\orcid{1234-5678-9012}
\affiliation{%
  \institution{KU Leuven, Belgium}
  \streetaddress{Celestijnenlaan 200A}
  \city{} 
  \country{}
}
\email{marc.denecker@kuleuven.be}

\renewcommand{\shortauthors}{D. Markovic et al.}

\begin{abstract}
Substantial efforts have been made in developing various Decision Modeling formalisms, both from industry and academia.
A challenging problem is that of expressing decision knowledge in the context of incomplete knowledge.
In such contexts, decisions depend on what is known or not known.
We argue that none of the existing formalisms for modeling decisions are capable of correctly capturing the epistemic nature of such decisions, inevitably causing issues in situations of uncertainty.
This paper presents a new language for modeling decisions with incomplete knowledge.
It combines three principles: stratification, autoepistemic logic, and definitions.
A knowledge base in this language is a hierarchy of epistemic theories, where each component theory may epistemically reason on the knowledge in lower theories, and decisions are made using definitions with epistemic conditions.
\end{abstract}

\keywords{Decision modeling, Epistemic logic, DMN, Knowledge Representation, Decision Theory}

\maketitle

\section{Introduction}
Decision-making is an everyday human activity, from trivial ones (e.g., crossing the street) to complex domain-specific decisions (e.g., sell the stock or not?).
As such, decision-making and modeling play an important role in artificial intelligence systems.
They also have a significant place as a philosophical topic~\cite{sep-decision-theory}, while theoretical aspects are well-studied as a branch of probability theory, named Decision Theory~\cite{peterson2017introduction}.
Modeling of decision processes is studied in different fields.
In Machine Learning, decision trees~\cite{kotsiantis2013decision} are used for representing decisions extracted from data~\cite{quinlan1990decision}.
In Knowledge Representation (KR), the Decision Model and Notation~(\DMN)~\cite{DMN:2021} is a broadly accepted standard with many implementations\footnote{An overview of the most relevant tools is available in~\cite{vandevelde2021leveraging}.}.

Decision Theory studies essential properties of decision-making, in terms of their \emph{utilities} -- a benefit of decisions in different possible worlds proportional to their probabilities.
Focusing on a generalized, theoretical notion of decisions, Decision Theory is not concerned with Knowledge Representation aspects of formally representing decisions.
This is reflected by the total absence of epistemic constructs from the models.
According to philosophical studies~\cite{sep-decision-theory}, decisions are made by an agent in accordance with its epistemic state.
Nevertheless, in Decision Theory, the epistemic nature of decisions is abstracted and brought to the implicit level as part of the utility function.
Decision modeling approaches, i.e., decision trees or \DMN, define decisions as consequences of the objective state of affairs, totally neglecting their epistemic character.
This incompatibility results in a \emph{semantic mismatch}, where the knowledge and its representation are not aligned.
This gives rise to many practical problems, mainly reflected by a lack of \emph{semantic clarity} \cite{clark1996requirements} and \emph{elaboration tolerance} \cite{mccarthy1998elaboration} as we shall demonstrate.

Problems caused by \emph{semantic mismatch} are easiest to observe in situations when a particular decision is to be made in case of ignorance.
This problem is referred to as a \emph{decision under uncertainty\footnote{The term \emph{decision under uncertainty} is overloaded -- it can also stand for a lack of information on probabilities of possible worlds -- and hence we stress that in this paper we are concerned with uncertainty as a form of ignorance about the exact state of affairs. For an analysis of different reasons causing uncertainty, see~\cite[Chapter~12]{russell2022artifcial}.}}.
Consider a decision problem where one of the parameters is the marital status of a person.
Usually, marital status takes one of the values married, single, divorced, or widowed.
Suppose that the marital status is not known, a decision can still be made in some cases.
For example, in most countries, a person can get married if they are not already married, and surpasses a minimum age. 
For this decision to be made, any partial knowledge about the marital status that entails that a person is not married is sufficient (e.g., it is known that the person is divorced or widowed).

The approach in Decision Theory relies on a clever design of utility function by providing scores to decisions for each possible world.
Then, making decisions under ignorance is done by applying particular \emph{optimization criteria}~\cite[Chapter~3]{peterson2017introduction} on the set of worlds considered possible according to the agent's knowledge.
This approach, while capable of capturing incomplete knowledge, lacks semantic clarity due to the unclear meaning of utility functions (i.e., utility function has no meaning without optimization criterion). 
On the positive side, optimization criteria can be assigned epistemic interpretation.
Another issue with this approach is that creating a specification is cumbersome (requiring score for each decision per possible world) and hence difficult maintenance, an important aspect of elaboration tolerance.
The complexity of updating utility functions originates in its use with optimization criteria, i.e., the new function should satisfy the same properties as the old one which can require radical changes for the tiniest modifications.
On the side of modeling formalisms, in particular decision trees and \DMN, the problem is much bigger since it is impossible to express that something is not known.
In attempts to work around this, knowledge engineers are forced to introduce artificial objects in the domain of values representing a special case when the value is unknown. 
This suddenly changes the meaning of the symbol being assigned a value from an objective to an epistemic meaning.
This approach is problematic from the semantic clarity perspective as the informal meaning of the statements is overloaded.
Furthermore, this method falls short of modeling different levels of ignorance, as it can only represent exact knowledge or complete ignorance, which is often insufficient.
Handling this problem would require further domain corruption, ultimately leading to an extremely elaboration-intolerant approach.

The problems of \emph{semantics mismatch} are resolved with epistemic logic.
However, decision models are more specific than the general epistemic theories, namely, there is a special link between the environmental (parameter) variables and decision variables\footnote{More about the general notion of this property is available in \cite{carbonnelle_vennekens_denecker_bogaerts_2023}.}. 
Furthermore, modeling decision problems often requires the specification of an exact epistemic state of an agent -- the problem of ``only knowing\footnote{More about only knowing is available in~\cite{levesque1990all}.}''.
Taking these properties into account, we show that Ordered Epistemic Logic (\OEL)~\cite{vlaeminck2012ordered} makes a perfect fit for the purposes of modeling decisions. 
Accordingly, we will show that \OEL is capable of modeling optimization approaches from Decision Theory and \DMN decision models; reflected in contributions:
(1) Proof that \OEL can correctly represent state-of-the-art (\DMN and Decision Theory) decision models, (2) The introduction of a new language \EDMN -- epistemic \DMN, (3) Translational semantics of \EDMN, using \OEL, (4) Proof that \EDMN can correctly represent state-of-the-art (\DMN and Decision Theory) decision models, (5) Defining syntactical fragment of \OEL for decision modeling, (6) Developing an \OEL KBS and translation of \EDMN to \OEL.


The rest of the paper is structured in the following sections: (2) Preliminaries, (3) a formalization of epistemic decisions and relation to approaches from Decision Theory and \DMN, (4) \OEL as a modeling language of epistemic decisions, (5) epistemic \DMN, (6) decision modeling fragment of \OEL, (7) \OEL and \EDMN implementation, (8) examples of \EDMN, (9) related work, and (10) conclusion.

\section{Preliminaries}

This section provides an overview of different existing concepts used in this paper. 
Our approach is focused on logic-based modeling of knowledge~\cite{jackson1989logic}, hence first we introduce those, and further define other formal objects in terms of the same concepts.

\subsection{Logic}
Following are the formal definitions of vocabulary and structure\footnote{For more details, see \cite{sep-modeltheory-fo}.}. 

\begin{definition}
    A vocabulary $\voc$ is a set consisting of sort symbols $s$, predicate symbols $p$, and function symbols $f$. Additionally, a vocabulary contains a mapping from predicate and function symbols to tuples of sort symbols. Predicate symbols can be mapped to an empty tuple (propositional symbol), while function symbols must be mapped to a tuple of at least one element (constant symbols).
    \label{def:ep-dec-voc}
\end{definition}

Following is the definition of a structure, object assigning mathematical objects to symbols from a vocabulary.
\begin{definition}
    A structure $\struct$ over vocabulary $\voc$ assigns:
    \begin{itemize}
        \item A nonempty set $s^\struct$ to each sort symbol $s$ in $\voc$. 
        \item Per predicate symbol $p$ of sort $(s_1, \dots, s_n)$: a function $p^\struct$ mapping elements of $s_1^\struct \times \dots \times s_n^\struct$ to $true$ or $false$.
        \item Per function symbol $f$ of sort $(s_1, \dots, s_n, s)$: A function $f^\struct$ mapping elements of $s_1^\struct \times \dots \times s_n^\struct$ to elements of $s^\struct$.
    \end{itemize}
\end{definition}

A term, atom, and formulae, of many-sorted logic, are defined inductively in a usual way, as well as their value in a structure \cite{sep-logic-many-sorted}.
$\strclass{\voc}$ denotes the class of all possible structures over $\voc$. 
A structure is \emph{partial} if it assigns partial value to some symbols from the vocabulary.
Following is the definition of a vocabulary of propositional and constant symbols and fixed interpretation of sorts.

\begin{definition}
    A propositional vocabulary with constant symbols and interpreted types  $\PCvoc$ (P/C vocabulary for short), is a vocabulary that contains only propositional and constant function symbols and all sort symbols have the same (finite) interpretation in all structures. 
    \label{def:pc-voc}
\end{definition}

\subsection{Inductive definitions}
First-order logic extended with inductive definitions FO(ID) was shown to be a valuable Knowledge Representation language \cite{Bruynooghe2016,Denecker2000}.
\begin{definition}
    Given that $p(\bar{t})$ is an FO atom (applied to a tuple of terms $\bar{t}$) and $\phi$ a FO formula over the same vocabulary $\voc$, an FO(ID) definition is a set of rules of the form $\forall \bar{x} : p(\bar{t}) \leftarrow \phi$.
\end{definition}
Semantics of definitions is not trivial, and it is defined using the well-founded semantics \cite{denecker2014well}.
However, all the definitions occurring in this work are monotone and not inductive, and hence could be translated to a set of equivalences, Clark's first order completion semantics \cite{DBLP:conf/adbt/Clark77} (with slight adjustments).
Nevertheless, definitions are syntactic constructs clearly separating definition parameters from defined concepts\footnote{This property is used in relevance inference~\cite{jansen2016relevance}.}, property suitable for separation of environmental and decision variables.

\subsection{Decision Model and Notation}

\begin{figure}
\small
\dmntable{Salutation}{U}{Gender, Marital status}{Salutation}
             {Male, -, Mr, 
             Female, Single, Ms, 
             Female, Married, Mrs}
\vspace{-2em}
\caption{\DMN decision table - Greeting example}
\label{fig:salutation}
\end{figure}

\DMN notation specifies formal representation of decisions as a special kind of table, named \emph{decision tables}; example in Figure~\ref{fig:salutation}. 
In such a table, the value of the output variable(s) (in blue, right) are defined by the values of the input variables (in green, left).
We also refer to these variables as decision and environment respectively.
Rows contain values for input variables and decisions in the last column. 
A symbol ``-'' in a value cells denotes that the value does not matter.w
The behavior of a table is defined by its \textit{hit policy}, as denoted in the top-left corner.
In the example, ``U'' stands for the unique hit policy, meaning that for any input there should be exactly one row matching it. 
The academic studies of \DMN formal semantics are found in \cite{calvanese2016semantics,calvanese2019semantic,cDMN,markovic2022semantics}.
Tables will be formally represented as:
\begin{definition}
    A \DMN decision table $T$ consists of environment variables $e_1,\dots,e_n$, a decision variable $d$, constraints\footnote{Often referred to as expressions in the S-FEEL language.} $C_{11},\dots,C_{nm}$, decision assignments $A_1,\dots,A_m$, and a hit-policy $hp$ (index $1,\dots,n$ represents columns and $1,\dots,m$ rows). Additionally, $C(e)$ stands for the first-order expression of the application of constraint $C$ on the variable $e$ and $A(d)$ for the assignment $A$ to the decision variable $d$.
    \label{def:dmn}
\end{definition}

The example presented in the table from Figure~\ref{fig:salutation} originates from a \DMN challenge \cite{DMN-challenge:2016}, presenting the problem of decision-making with ignorance. 
Namely, a decision can be made if the gender is known to be ``Male'' even though marital status is unknown.
The \DMN standard natively does not support this kind of problem while some tools\footnote{\url{https://camunda.com/}, \url{https://signavio.com/}} like Camunda and Signavio are capable of performing a simplified form of decision-making by providing all possible decisions in the case of unknown variables.
The greeting problem will serve as a running example throughout this paper.

\subsection{Utility}
The notion of utility is coming from the Decision Theory, where it is used to represent how beneficial certain decisions/actions are in a particular state of affairs.
Following is the definition of the utility function and its application on the running example is in Table \ref{table:utility-table}. 

\begin{definition}[Decision utility function]
    Given an environment and decision P/C vocabularies $\PCvoc_e$ and $\PCvoc_d$ a decision utility function $f_u$ maps pairs of structures $(\struct_e, \struct_d)$ (from $\strclass{\PCvoc_e} \times \strclass{\PCvoc_d}$) to an ordinal number. 
\end{definition}
\begin{table}
\caption{Utility function of the greeting example.}
\label{table:utility-table}
\begin{tabular}{c|c|c|c|c}
    \toprule
    & Male & Male & Female & Female \\
    & Single & Married & Single & Married \\
    \hline
    Mr & 1 & 1 & 0 & 0 \\
    Mrs & 0 & 0 & 0 & 1 \\
    Ms & 0 & 0 & 1 & 0 \\
    \bottomrule
\end{tabular}
\end{table}

\subsection{Ordered Epistemic Logic}
Ordered Epistemic Logic (\OEL) was introduced independently in \cite{DBLP:conf/aaai/Konolige88,DBLP:conf/birthday/DeneckerVVWB11} as a language capable of expressing many interesting KR epistemic example while providing lower complexity compared to Autoepistemic Logic~\cite{moore1985possible}.
Before formally defining \OEL we shall briefly introduce a notion of epistemic state\footnote{For more details, the reader is invited to check \cite{sep-logic-epistemic}.}.
\begin{definition}
    Given a first-order vocabulary $\voc$, any collection of structures $E$ such that $E \subseteq \strclass{\voc}$ is an epistemic state.  
    \label{def:epistemic-state}
\end{definition}
Intuitively, singleton epistemic states represent a state of absolute knowledge (i.e., everything is known), while a set of all possible worlds corresponds to the state where nothing is known, the empty set represents inconsistency in knowledge.
Opposed to standard epistemic logic \cite{sep-logic-epistemic} the $K$ operator in \OEL is not self-referential and can refer only to the knowledge lover in the hierarchy.
\begin{definition}
    A set of FO(ID) theories $\mathcal{T}$ (over the same vocabulary $\voc$) is an order epistemic theory iff:
    \begin{enumerate}
        \item Each theory in $\mathcal{T}$ is composed of FO(ID) logic formulas defined in a standard way with one additional rule: if $\psi$ is a formula and $T \in \mathcal{T}$ then $K[T][\psi]$ is a formula; with one exception, $K$ operators are not appearing in the head of definition rules.  
        \item There exists a strict partial order $<$ such that if there is $K[T'][\psi]$ in theory $T$ then $T' < T$.
    \end{enumerate}
    \label{def:oel}
\end{definition}

The stratification of theories in \OEL provides semantic reduction\footnote{For more details, reader is referred to the paper \cite{vlaeminck2012ordered}.} of the $K$ operator to the standard first-order logic.

\begin{definition}
    Let $\mathcal{T}$ be an \OEL theory over vocabulary $\voc$, and $\struct$ a (partial) structure over $\voc$. Structure $\struct$ satisfies theory $T$ from $\mathcal{T}$, in symbols $\struct \models T$, iff:
    \begin{itemize}
        \item For atom $p(\bar{t})$ from $T$, $\struct \models p(\bar{t})$ iff $p^\struct(\bar{t}^\struct) = true$.
        \item The inductive cases for $\land, \lor, \neg, \exists, \forall$ are defined as usual.
        \item For modal operator $K[T'][\psi]$, $\struct \models K[T'][\psi]$, iff $\struct' \models \psi$ for all $\voc$-structures $\struct'$ extending $\struct$ such that $\struct' \models T'$.
    \end{itemize}
    \label{def:oel-sem}
\end{definition}
 
Note that partial order ensures that there is a theory at the ``bottom'' that is free of epistemic operators, which ensures the reduction to standard FO semantics.
Following is the \OEL representation of the running example, where theory $T$ represents the user input (what is known about gender and marital status) while the decision table (Figure \ref{fig:salutation}) is modeled\footnote{We use obvious abbreviations for variables and values.} as a definition.
\[\left\{
    \begin{array}{c}
    sal() = Mr \leftarrow K[T][gen() = Ma].\\
    sal() = Ms \leftarrow K[T][gen() = Fe \land mar() = Sin].\\
    sal() = Mrs \leftarrow K[T][gen() = Fe \land mar() = Mar].\\
    \end{array}
\right\}\]

\section{Generalization of different decision models}

In this section we present formal generalization of \DMN decision tables and optimization methods found in Decision Theory\footnote{In this work we focus on exact decision functions, meaning that there exists at most one decision for any possible state of affairs. For practical systems, it is common to impose such criteria, meaning that the agent is never indifferent between the options.} in terms of logical objects introduced earlier. 
Furthermore, we introduce a general notion of epistemic decision function capable of capturing both of the other formalisms.


\subsection{DMN decisions}

A \DMN table can be represented, abstractly, as a function defined in terms of logical objects, i.e., structures and vocabularies.
An important observation is that environment/decision variables of a \DMN table are either propositional or constant function symbols, and furthermore ranging over a finite set of values.

\begin{definition}[\DMN Decision function]
    Given an environment and decision P/C vocabularies $\PCvoc_e$ and $\PCvoc_d$ a \DMN table is a function $f_d$ mapping environment structures $\struct_e$ from $\strclass{\PCvoc_e}$ to decision structures $\struct_d$ from $\strclass{\PCvoc_d}$.
\label{def:dmn-dec-function}
\end{definition}

The environment vocabulary of the running example consist\footnote{Names of sorts start with capital letters, constant symbols are denoted with $()$ after the name. Abbreviations are used for constant names (e.g., ``gen'' stands for ``gender'').} of two sorts $Gender$ and $MStatus$, and two constants $gen()$ of sort $Gender$ and $mar()$ of sort $MStatus$.
Decision vocabulary consist of sort $Salutation$ and a constant $sal()$ of sort $Salutation$.  
An example of environment structure $\struct_e$ could be:
\[\left\{
    \begin{array}{c}
    Gender = \{Male,Female\}; MStatus = \{Single,Married\};\\
    gen() = Male; mar() = Single
    \end{array}
\right\}\]

When clear from the context, a shorter notation, omitting the sorts and constant names, will be used: $\{Male; Single\}$ (even shorter for the running example  $\{Ma; Sin\}$).
The decision function of the running example is hence: 
\[\left\{
    \begin{array}{c}
    \{Ma; Sin\} \rightarrow \{Mr\}, \{Ma; Mar\} \rightarrow \{Mr\}, \\ 
    \{Fe; Sin\} \rightarrow \{Ms\}, \{Fe; Mar\} \rightarrow \{Mrs\}
    \end{array}
\right\}\]

\subsection{Optimal decisions}
As already introduced, a common approach in Decision Theory for reasoning under uncertainty is to use some optimality criterion to derive a suitable decision provided an adequate utility function.
\cite{peterson2017introduction} provides an overview of such functions: maximin, leximin, optimist-pessimist, minimax regret, etc.
In the following definition, we generalize this approach, allowing us to study many other such optimality criterion.

\begin{definition}[Optimal decision function]
    Given environment and decision P/C vocabularies $\PCvoc_e$ and $\PCvoc_d$, a set of decisions $D$, and a decision utility function $f_u$, an optimal decision function $f_o$ for an epistemic state $E$ is defined by a pair of functions $\Fopt$ and $\Fagg$ in the following way:
    \[f_o(E) = \Fopt_{d \in D}(\Fagg_{w \in E} f_{u}(w,d))\]
    Where $\Fagg$ is an aggregate function mapping a set of utility values (for the decision $d$ in worlds $w$) to a utility value, and $\Fopt$ is selecting a decision with an optimal utility value (obtained from aggregation).
\end{definition}

For example, the \textit{maximin} principle selects the best of all worst cases and is defined as $\max_{d \in D}(\min_{w \in E} f_{u}(w,d))$.
Applied to the running example on the worlds $\{\{Ma, Sin\}, \{Ma, Mar\}\}$ (i.e., gender known to be male and marital status is unknown), \textit{maximin} will derive ``Mr'' as the best choice based on the utility function from the Table \ref{table:utility-table}.
This is the case because decisions $Mrs$, and $Ms$ have score $0$ in at least one of the possible worlds, and hence this would be the result after the minimization of the value.
On the other hand, decision $Mr$ has a score of $1$ in both worlds and hence, its score will be $1$, which is the maximum among other scores.

\subsection{Epistemic decisions}
As already motivated, decisions are made according to an epistemic state of an agent.
Formalization of such an idea can be realized by an \emph{epistemic decision function}, mapping epistemic states to decisions.

\begin{definition}[Epistemic decision function]
    Given an environment and decision P/C vocabularies $\PCvoc_e$ and $\PCvoc_d$ an epistemic decision function $f_e$ is a (partial) map from collections of structures $E \subseteq \strclass{\PCvoc_e}$ to the decision structures $\struct_d \in \strclass{\PCvoc_d}$.
    \label{def:ep-dec-function}
\end{definition}

In the running example, an epistemic state where gender is known to be ``male'' and marital status unknown, formally represented as $\{\{Ma, Sin\}, \{Ma, Mar\}\}$, would be mapped to the decision $\{Mr\}$.
Since an epistemic function can be any function, it is clear that it subsumes the notion of \emph{optimal decision function} and \emph{\DMN decision function}; formally stated in the following theorem.

\begin{theorem}
    Each \DMN decision function $f_d$ and optimal decision function $f_o$ are an epistemic decision function.
\label{the:oed-as-edf}
\end{theorem}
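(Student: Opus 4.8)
The plan is to show that each of the two more specific notions of decision function can be recast, without loss of information, as a special case of an epistemic decision function in the sense of Definition~\ref{def:ep-dec-function}. The key observation is purely type-theoretic: an epistemic decision function is an \emph{arbitrary} (partial) map from $\pow{\strclass{\PCvoc_e}}$ to $\strclass{\PCvoc_d}$, so it suffices to exhibit, for each \DMN decision function $f_d$ and each optimal decision function $f_o$, a canonical such map that encodes it faithfully.

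First I would handle the \DMN case. By Definition~\ref{def:dmn-dec-function}, $f_d$ is a total function from $\strclass{\PCvoc_e}$ to $\strclass{\PCvoc_d}$; that is, it takes a single, fully determined environment structure and returns a decision structure. To obtain an epistemic decision function I would restrict attention to the \emph{singleton} epistemic states: define $f_e(\{\struct_e\}) = f_d(\struct_e)$ for every $\struct_e \in \strclass{\PCvoc_e}$, and leave $f_e$ undefined on every epistemic state that is not a singleton. Since $f_e$ is allowed to be a partial map, this is a legitimate epistemic decision function, and it agrees with $f_d$ on exactly the inputs $f_d$ talks about (via the evident bijection $\struct_e \mapsto \{\struct_e\}$ between environment structures and singleton epistemic states). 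Intuitively this is the right encoding because, as noted after Definition~\ref{def:epistemic-state}, singleton epistemic states are precisely the states of absolute knowledge, which is exactly the regime a \DMN table presupposes.

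Next I would handle the optimal decision function. Here the object $f_o$ is already, by its very definition, a map sending an epistemic state $E$ to the decision $\Fopt_{d\in D}\bigl(\Fagg_{w\in E} f_u(w,d)\bigr)$; the only gap is that $f_o$ produces an element of the abstract set $D$ rather than a structure in $\strclass{\PCvoc_d}$. So the step is to fix, once and for all, the identification between $D$ and (a subset of) $\strclass{\PCvoc_d}$ that is implicit in calling $f_u$ a \emph{decision} utility function over $\PCvoc_d$ --- each $d \in D$ names a decision structure $\struct_d$ --- and then set $f_e(E) = \struct_d$ where $\struct_d$ corresponds to $f_o(E)$, with $f_e$ undefined on those $E$ for which the $\Fopt$ does not return a unique decision (consistent with the footnote restricting attention to exact decision functions). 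Under this identification $f_e$ is literally $f_o$ with its codomain relabelled, hence an epistemic decision function.

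The main obstacle, such as it is, is not a deep one: it is simply being careful about the matching of codomains and about partiality. For \DMN one must note that extending a function defined only on singleton states to the full powerset is harmless precisely because epistemic decision functions are permitted to be partial; for optimal decisions one must make explicit the (otherwise tacit) correspondence between the set $D$ of decisions and decision structures, and confirm that exactness of the decision function is exactly what guarantees $f_e$ is well defined as a function at all. Once these bookkeeping points are settled, the theorem follows immediately, and it is essentially a restatement of the remark in the text that ``an epistemic function can be any function'' and therefore subsumes both special cases.
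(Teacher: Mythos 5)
Your proposal is correct and follows essentially the same route as the paper's own proof: restrict a \DMN decision function to singleton epistemic states (undefined elsewhere) and take an optimal decision function as an epistemic decision function directly. Your extra bookkeeping about identifying the decision set $D$ with structures in $\strclass{\PCvoc_d}$ is a reasonable clarification the paper leaves implicit, but it does not change the argument.
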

\begin{proof}
    Given an optimal decision function $f_o$ an epistemic decision function $f_e$ is defined as: For each epistemic state $E$ from $\strclass{\PCvoc_e}$: $f_e(E) = f_o(E)$. 
    Given a \DMN decision function $f_d$ an epistemic decision function $f_e$ is defined as: For each singleton epistemic state $E$: $f_e(E) = f_d(E)$ and undefined for all others. 
\end{proof}

Theorem \ref{the:oed-as-edf} is a trivial one, and it shows that any DMN decision table or optimal decision function can be represented as an epistemic decision function, making it a good generalization model.
One of the reasons for successful application of optimization approaches on decision-making is that decision models are following some rationality principle.
In Decision Theory this is reflected in the preference axiom\footnote{For discussion see \cite[Chapter~8]{peterson2017introduction}.}, stating that the preference relation is transitive, complete, and continuous. 
While we believe that the relation between the two should be investigated in another direction, i.e., how epistemic decision functions correspond to optimization approaches and how they can be compiled into one, this paper focuses on languages suitable for modeling epistemic decisions, and hence this problem will not be addressed in this work.
 
\section{Modeling decisions with \OEL}

The fact that decisions are made in an epistemic state of an agent advocates the use of epistemic logic for modeling such. 
Further, the stratification of variables in environmental and decision ensures the requirements for the use of ordered epistemic logic.
In the previous section, the generalized notion of decision model was formalized ultimately as an epistemic decision function.
Hence, to show the suitability of \OEL for modeling decision problems is to show that any epistemic function can be modeled as an \OEL theory. 

\begin{theorem}
    Given an environment and decision P/C vocabularies $\PCvoc_e$ and $\PCvoc_d$, and an epistemic decision function $f_e$, there exists an \OEL theory $T_e$ (over $\PCvoc_e \cup \PCvoc_d$) such that for each epistemic state $E$:
    \[f_e(E) = \struct_d \Leftrightarrow Mod(T_e,T_E) = \{\struct_d\}\]
    \[f_e(E) = \bot \Leftrightarrow Mod(T_e,T_E) = \{\}\]
    Where $Mod(T_1,\dots,T_n)$ denotes a class of models for a sequence of \OEL theories $T_1,\dots,T_n$ taking theory $T_1$ as the top one. $T_E$ is a first-order theory such that $Mod(T_E) = E$. $\bot$ stand for undefined.
    \label{the:edf-as-oel}
\end{theorem}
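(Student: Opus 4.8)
The plan is to construct the \OEL theory $T_e$ explicitly by "hard-coding" the graph of $f_e$ into a definition whose bodies are epistemic conditions over the bottom theory $T_E$. Since $\PCvoc_e$ and $\PCvoc_d$ are P/C vocabularies with finitely many structures, the set $\strclass{\PCvoc_e}$ is finite, and so is each epistemic state $E \subseteq \strclass{\PCvoc_e}$ and the domain of $f_e$. For each structure $\struct_e \in \strclass{\PCvoc_e}$ we can write a first-order sentence $\chi_{\struct_e}$ over $\PCvoc_e$ that is true in exactly that structure (a conjunction of literals fixing every propositional symbol and every constant to its value, using the fixed finite sort interpretations). Dually, for a set $E$, the sentence $\bigvee_{\struct_e \in E} \chi_{\struct_e}$ characterizes membership in $E$; note $Mod(\bigvee_{\struct_e\in E}\chi_{\struct_e}) = E$, which is how $T_E$ is presented.

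First I would set up, for each decision structure $\struct_d$ in the range of $f_e$ and for each propositional/constant symbol $\sigma$ of $\PCvoc_d$, a defining rule of the form $\sigma(\bar t) = v \leftarrow \bigvee_{E \,:\, f_e(E) = \struct_d,\ v = \sigma^{\struct_d}(\bar t)} \gamma_E$, where $\gamma_E$ is an epistemic formula $K[T_E]\big[\bigvee_{\struct_e \in E}\chi_{\struct_e}\big]$ that "recognizes" exactly the epistemic state $E$. The subtlety is that $K[T_E][\bigvee_{\struct_e\in E}\chi_{\struct_e}]$ holds in the relevant sense iff every model of $T_E$ satisfies that disjunction, i.e. iff $Mod(T_E) \subseteq E$, which is weaker than $Mod(T_E) = E$. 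To pin down $E$ exactly I would instead use, for each $\struct_e$, the indicator $b_{\struct_e} := K[T_E][\neg\chi_{\struct_e}]$ (true iff $\struct_e \notin E$), and let $\gamma_E := \bigwedge_{\struct_e \in E} \neg b_{\struct_e} \land \bigwedge_{\struct_e \notin E} b_{\struct_e}$; this evaluates to true exactly when $Mod(T_E) = E$. Because the $\gamma_E$ are mutually exclusive over all $E$, the definition is well-behaved (monotone, non-inductive, reducible to Clark completion as the preliminaries note), so when stacked on top of $T_E$ the two-theory sequence has, by Definition~\ref{def:oel-sem}, exactly the models that extend the (unique, when $Mod(T_E)=E$) reading of the decision symbols according to $f_e(E)$.

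The argument then splits into the two claimed equivalences. If $f_e(E) = \struct_d$, then in any model of the sequence the bodies force every $\PCvoc_d$-symbol to the value it takes in $\struct_d$ and forbid all others (here I use that $f_e$ is an exact decision function, so at most one $\gamma_E$ fires and the heads do not conflict), giving $Mod(T_e, T_E) = \{\struct_d\}$; conversely a one-element model set of this shape forces $f_e(E) = \struct_d$. If $f_e(E) = \bot$, i.e. $E$ is not in the domain, then no $\gamma_E$ fires, so no rule constrains the decision symbols — but then the completion leaves them unconstrained, which would give several models, not zero. To get the $\{\}$ case I would add one more theory layer, or equivalently a constraint in $T_e$, of the form $\bigvee_{E \in dom(f_e)} \gamma_E$: a plain FO(ID) sentence asserting that the epistemic state is one of the recognized ones, whose failure makes the sequence unsatisfiable exactly when $f_e(E) = \bot$.

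The main obstacle I anticipate is precisely this last point — making "undefined" collapse to the empty model class rather than to an underconstrained (hence many-model) situation — together with getting the semantics of nested $K[T_E][\cdot]$ right: one must be careful that $K$ quantifies over \emph{extensions} of the current partial structure that model $T_E$, so the indicators $b_{\struct_e}$ really do read off $Mod(T_E)$ and not something coarser, and that the bottom theory $T_E$ is genuinely epistemic-operator-free so Definition~\ref{def:oel-sem} bottoms out. Everything else (finiteness, definability of single structures by FO sentences, mutual exclusivity of the $\gamma_E$, monotonicity of the definition) is routine and can be stated without computation.
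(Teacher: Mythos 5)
Your proposal follows the same overall strategy as the paper's proof: enumerate the (finitely many) epistemic states, and hard-code the graph of $f_e$ as a definition whose rule bodies are epistemic formulas over the bottom theory $T_E$. The interesting difference is in the recognizer formula. The paper encodes the state $E$ as $\bigwedge_{c \in \PCvoc_e} K[T_E][\bigvee_{v \in c^E} c = v]$, which has two weaknesses you implicitly repair: it only tests a containment (it fires whenever $c^{Mod(T_E)} \subseteq c^E$ for all $c$, so e.g.\ the rule for the total-ignorance state fires on every input, contradicting the paper's claim that exactly one body is satisfied), and it cannot even distinguish two states with the same per-constant value projections. Your $\gamma_E$, built from the indicators $K[T_E][\neg\chi_{\struct_e}]$ for individual structures, characterizes $Mod(T_E) = E$ exactly and makes the bodies mutually exclusive, which is what the correctness argument actually needs; this is a genuine improvement over the paper's construction. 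The one place where you are too pessimistic is the $\bot$ case: for a \emph{defined} constant ranging over a nonempty finite sort, "no rule applicable" does not leave the symbol unconstrained under the definitional (well-founded, or full-completion-over-all-values) semantics --- it forbids every value, so the theory is already unsatisfiable, which is exactly what the paper relies on (and later formalizes around Theorem~\ref{the:ebd-unique}). Your extra constraint $\bigvee_{E \in dom(f_e)} \gamma_E$ is therefore unnecessary, though harmless, and would indeed be needed if one weakened definitions to equivalences over only the mentioned values, as the paper itself warns.
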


\begin{proof}
Given that both $\PCvoc_e$ and $\PCvoc_d$ consist only of finitely many constants ranging over finite domains (according to Definition \ref{def:pc-voc}) it follows that the class of all structures $C(\PCvoc_e \cup \PCvoc_d)$ is a finite set. Hence, the set of all possible epistemic states $\pow{C(\PCvoc_e \cup \PCvoc_d)}$ is finite as well (where $\pow{}$ stands for a power set).

Let $c^E$ denote a set of values that constant $c$ is assigned by structures in $E$, formally: $c^E = \{v \mid \exists \struct \in E : c^\struct = v\}$.
Then, each epistemic state $E$ is expressible as an \OEL statement:
\[\bigwedge_{c \in \PCvoc_e} K[T_E][\bigvee_{v \in c^E} c = v]\]
where $T_E$ is the theory that actually expresses the state of affairs of the environment $E$.

Finally, if the epistemic decision function $f_e$ maps some epistemic state $E$ to a decision structure $\struct_d$, formally $f_e(E) = \struct_d$, and $d$ is a decision constant symbol from $\PCvoc_d$, and $v_d$ is a value assigned to $d$ by $\struct_d$, formally $d^{\struct_d} = v_d$, then the ordered epistemic logic theory shall contain the following definition rule:
\[d = v_d \leftarrow \bigwedge_{c \in \PCvoc_e} K[T_E][\bigvee_{v \in c^E} c = v]\]
Otherwise, i.e., when $f_e(E) = \bot$, no rule is added to the theory.

Let the theory $T_e$ consist of definition composed of such rules for each epistemic state $E$.
Such a theory is finite since set of all possible epistemic states is also a finite set.
By construction such theory satisfies the constraint from the theorem.
This is true because for each epistemic state $E$ (mapped by the function $f_e$) there is exactly one rule whose right-hand side is satisfied and hence the only model is where the left-hand side is satisfied as well, and the left-hand side is constructed by the function $f_e$. 
In case when $E$ is mapped to $\bot$, no rule is applicable and hence theory has no models.
\end{proof}

Combining theorems \ref{the:oed-as-edf} and \ref{the:edf-as-oel} results in the following corollary. 

\begin{corollary}
    Every \DMN decision function and optional decision function can be modeled as an \OEL theory.
    \label{cor:odf-as-oel}
\end{corollary}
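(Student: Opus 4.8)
The plan is to derive Corollary~\ref{cor:odf-as-oel} directly by composing the two preceding results, treating it as a bookkeeping exercise rather than a fresh argument. First I would recall from Theorem~\ref{the:oed-as-edf} that every \DMN decision function $f_d$ and every optimal decision function $f_o$ is, in particular, an epistemic decision function: the theorem's proof exhibits the explicit $f_e$ (equal to $f_o$ on all epistemic states, or equal to $f_d$ on singletons and undefined elsewhere). So it suffices to handle an arbitrary epistemic decision function.

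Next I would invoke Theorem~\ref{the:edf-as-oel}: given the environment and decision P/C vocabularies $\PCvoc_e$ and $\PCvoc_d$ and an epistemic decision function $f_e$, there is an \OEL theory $T_e$ over $\PCvoc_e \cup \PCvoc_d$ whose models, together with the environment theory $T_E$, pin down exactly the decision structure $\struct_d = f_e(E)$ (or give no models when $f_e(E) = \bot$). Chaining these, for a given $f_d$ (resp.\ $f_o$) I take the associated $f_e$ from Theorem~\ref{the:oed-as-edf}, then apply Theorem~\ref{the:edf-as-oel} to that $f_e$ to obtain the desired \OEL theory $T_e$; the correctness property $f_e(E) = \struct_d \Leftrightarrow Mod(T_e, T_E) = \{\struct_d\}$ then transfers verbatim to $f_d$ and $f_o$, since on the relevant epistemic states (singletons for $f_d$, all states for $f_o$) the values agree by construction.

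The only genuine subtlety worth spelling out is that the hypotheses line up: Theorem~\ref{the:edf-as-oel} requires P/C vocabularies, and Theorem~\ref{the:oed-as-edf}'s constructions do not alter the vocabularies, so the composition is well-typed. For the \DMN case I would also note explicitly that the guarantee is only nontrivial on singleton epistemic states (reflecting that a \DMN table is a map on complete environment structures), which is exactly the regime where $f_e$ was defined; on non-singleton states $f_e$ is undefined and the \OEL theory has no models there, consistent with the $\bot$ clause. I do not expect any real obstacle here — the corollary is, as the authors say of Theorem~\ref{the:oed-as-edf}, essentially trivial once the two theorems are in hand; the main thing to get right is stating the composition cleanly and flagging the singleton caveat for \DMN so the reader is not misled into expecting a stronger statement.

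\begin{proof}
Immediate from Theorems~\ref{the:oed-as-edf} and~\ref{the:edf-as-oel}. Let $f_d$ be a \DMN decision function (resp.\ $f_o$ an optimal decision function) over P/C vocabularies $\PCvoc_e$ and $\PCvoc_d$. By Theorem~\ref{the:oed-as-edf} there is an epistemic decision function $f_e$ over the same vocabularies that agrees with $f_d$ on all singleton epistemic states and is undefined elsewhere (resp.\ agrees with $f_o$ on every epistemic state). Applying Theorem~\ref{the:edf-as-oel} to $f_e$ yields an \OEL theory $T_e$ over $\PCvoc_e \cup \PCvoc_d$ with, for every epistemic state $E$,
\[
f_e(E) = \struct_d \Leftrightarrow Mod(T_e, T_E) = \{\struct_d\}, \qquad f_e(E) = \bot \Leftrightarrow Mod(T_e, T_E) = \{\}.
\]
Since $f_e$ coincides with $f_d$ (resp.\ $f_o$) on the epistemic states where the latter is defined, $T_e$ models $f_d$ (resp.\ $f_o$) in the sense above, which proves the claim.
\end{proof}
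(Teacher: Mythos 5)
Your proposal matches the paper exactly: the corollary is stated there as an immediate consequence of combining Theorems~\ref{the:oed-as-edf} and~\ref{the:edf-as-oel}, which is precisely the composition you carry out. Your added remark about the singleton caveat for the \DMN case is a reasonable clarification but does not change the argument.
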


With this result, the first contribution mentioned in the list from the Introduction is finished.
The practical merits of these results are suggesting that many decision modeling techniques are expressible as \OEL theory. 
This means that any such language could make use of \OEL solvers as an engine. 

\section{Epistemic \DMN}

Due to the lack of epistemic language constructs, \DMN has no means of expressing more complex decision functions such as epistemic ones (from Definition \ref{def:ep-dec-function}).
The class of decision models expressible by \DMN is characterized in Definition \ref{def:dmn-dec-function}, where only single structures (i.e., exact epistemic states) are mapped to decisions. 
This is defacto following from the \DMN standard as specified by OMG~\cite{DMN:2021}.
Consequently, \DMN is not well suited for modeling decisions under uncertainty, and hence all tools developed based on this standard are lacking this property.
To address this inadequacy we propose an extension of \DMN with the epistemic operator.
We shall call it \EDMN, for epistemic \DMN.

Towards the actual extended notation, we first define a translational semantics of standard \DMN by translating decision tables to \OEL theories. 
Recall, Definition \ref{def:dmn}, $C(e)$ is a first-order expression of \DMN constraint $C$ on the variable $e$, and similar for the decision assignment $A(d)$.

\begin{definition}
    A \DMN table $T_d$ consisting of environment variables $e_1,\dots,e_n$, a decision variable $d$, constraints $C_{11},\dots,C_{nm}$, decision assignments $A_1,\dots,A_m$, and hit-policy\footnote{Due to space constraints, we define translation only for ``Any'' hot-policy since it directly corresponds to definition. Other hit policies require additional axioms.} ``Any'' is translated to \OEL theory, denoted as $\dmntr{T_d}$, containing the following definition:
    \[\begin{array}{c}
        \left\{
        \begin{array}{c}
        A_1(d) \leftarrow K[T][C_{11}(e_1)] \land \dots \land K[T][C_{n1}(e_n)].\\
        \dots\\
        A_m(d) \leftarrow K[T][C_{1m}(e_1)] \land \dots \land K[T][C_{nm}(e_n)].\\
        \end{array}
        \right\}
    \end{array}\]
    Where $T$ is a theory expressing information about environment variables (i.e., input values provided by the user).
    \label{def:dmn-to-oel}
\end{definition}

The translational semantics as proposed here suggest that every constraint in a \DMN decision table is to be interpreted as epistemic.
Aligned with the motivation for this paper, this is the first extension of \DMN which has no impact on syntax but rather on the informal interpretation of decision tables. 
Additionally, the language is extended with the following two constructs:
\begin{itemize}
    \item Two constraints can be connected with logical $\lor$ operator\footnote{Note that \DMN specifies comma ``,'' operator, usually interpreted as or, while actually it stands for connecting two rows, in other words comma operator would be translated to $\lor$ but outside of the $K$ operator!}, $C_1 \lor C_2$ and they stand for $K[T][C_1(e) \lor C_2(e)]$ for some variable $e$.
    \item Constraint $\neg K$ expressing that the value of some variable $e$ is not known. If value of variable $e$ ranges over sort $s_e$ this operator is translated as: $\forall v \in s_e: \neg K[T][e = v]$. Additionally, $\neg K[C]$ stands for $\neg K[T][C(e)]$.
\end{itemize}
Given the new language constructs and their translations to \OEL theory, the semantics of an \EDMN table is defined as follows.

\begin{definition}
    Given a \DMN decision table $T_d$ over environment variables $e_1,\dots,e_n$, and decision variable $d$, and its translation to \OEL theory $\dmntr{T_d}$,  the value of decision derived by the \DMN table is defined as: $d^\struct$ if $Mod(\dmntr{T_d}, T_E) = \{\struct\}$ and $\bot$ if $Mod(\dmntr{T_d}, T_E) = \{\}$, where $T_E$ is a first-order theory specifying (potentially partially) values of environment variables.
    \label{def:dmn-semantics}
\end{definition}

Note that the translation of \DMN table to \OEL has only one model, if satisfiable.
This is because decisions are defined and all parameters of these definitions are epistemic and hence always true or false (this property is formalized in an upcoming section in Theorem \ref{the:ebd-unique}).
However, it is possible for a decision to be overdefined (i.e., multiple rules with different heads are satisfied) or undefined (i.e., no rule is satisfied).
These properties are not desired, and such decision tables are considered wrong.
A detailed investigation is made in \cite{calvanese2016semantics} where overdefindness corresponds to hit policy violation and undefinedness to violation of completeness property.
In both cases, \OEL theory would be unsatisfiable due to the definitional nature of the theory, which might not be the case in some other translations, as we shall see in the next section.

Definitions \ref{def:dmn-to-oel} and \ref{def:dmn-semantics} together are fulfilling the second (new language \EDMN) and third (translational semantics) point of the contribution list.
It remains to formalize and prove the fourth contribution point by showing that \EDMN can express any epistemic decision function (Definition \ref{def:ep-dec-function}). 

\begin{theorem}
    Given an environment and decision P/C vocabularies $\PCvoc_e$ and $\PCvoc_d$, and an epistemic decision function $f_e$, there exists an \EDMN decision table $T_d$ (over $\PCvoc_e \cup \PCvoc_d$) such that $f_e$ maps epistemic state $E$ to a structure $\struct$ iff for that epistemic state \EDMN table $T_d$ derives decision $d^\struct$.
    \label{the:edf-as-edmn}
\end{theorem}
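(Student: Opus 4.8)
The plan is to mirror the construction used in the proof of Theorem~\ref{the:edf-as-oel}, but now building an \EDMN decision table directly instead of a raw \OEL theory, and then invoking the translational semantics of Definitions~\ref{def:dmn-to-oel} and~\ref{def:dmn-semantics} to recover the required behaviour. First I would observe, exactly as before, that since $\PCvoc_e$ and $\PCvoc_d$ consist of finitely many constants over finite domains, the set $\pow{\strclass{\PCvoc_e}}$ of all epistemic states is finite. So it suffices to produce, for each epistemic state $E$ on which $f_e$ is defined, one row of the table that ``fires'' exactly on $E$, and to check that on states where $f_e$ is undefined no row fires (giving $\bot$ via the empty model set).

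The key step is encoding the characteristic condition of an epistemic state $E$ as an \EDMN row. Recall that in the \OEL proof each $E$ was captured by the conjunction $\bigwedge_{c \in \PCvoc_e} K[T_E][\bigvee_{v \in c^E} c = v]$, where $c^E$ is the set of values $c$ takes across $E$. This is precisely what the new \EDMN constructs were designed to express: for each environment constant $c$, the disjunctive constraint $\bigvee_{v \in c^E} c = v$ sits inside a single $K$ operator by the first added construct ($C_1 \lor C_2 \mapsto K[T][C_1(e) \lor C_2(e)]$), and the conjunction over the columns $c_1,\dots,c_n$ is exactly the shape of a \DMN row under Definition~\ref{def:dmn-to-oel}. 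So for the state $E$ with $f_e(E) = \struct$, I would add the row whose constraint in column $c_i$ is $\bigvee_{v \in c_i^E} c_i = v$ and whose decision assignment is $d = d^{\struct}$; for states mapped to $\bot$, no row is added. Taking hit policy ``Any'', the translation $\dmntr{T_d}$ of this table is then, rule by rule, identical to the \OEL theory constructed in Theorem~\ref{the:edf-as-oel}.

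From there the argument is inherited: by construction, for a given environment theory $T_E$ expressing epistemic state $E$, the body $\bigwedge_i K[T_E][\bigvee_{v} c_i = v]$ of the row built for $E$ is satisfied, and (because the $c_i^{E}$ sets are pairwise distinguishing across distinct epistemic states) no other row's body is satisfied, so $\dmntr{T_d}$ has the unique model assigning $d = d^{\struct}$; when $f_e(E)=\bot$ no body is satisfied and, the theory being purely definitional, $Mod(\dmntr{T_d},T_E)=\{\}$. Applying Definition~\ref{def:dmn-semantics} then yields that the \EDMN table derives $d^{\struct}$ exactly when $f_e(E)=\struct$, and $\bot$ exactly when $f_e(E)=\bot$, which is the claim.

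The main obstacle I anticipate is not the fixpoint/model-uniqueness bookkeeping — that is handed to us by Theorem~\ref{the:edf-as-oel} and the remark following Definition~\ref{def:dmn-semantics} — but rather the bookkeeping of \emph{distinguishability}: verifying that the family of row-bodies genuinely separates distinct epistemic states, so that exactly one row fires. Two different epistemic states $E \neq E'$ can induce the same per-column value sets $c_i^{E} = c_i^{E'}$ for all $i$ (since an epistemic state need not be a product of its column projections), in which case the naive construction assigns them the same row. I would handle this by noting that $f_e$ must then either agree on $E$ and $E'$ (no conflict) or, if it disagrees, by strengthening the row bodies: since the domains are finite, each epistemic state $E$ is the model set of \emph{some} first-order environment theory $T_E$, and one can instead encode $E$ by a constraint expressing membership in $E$ directly — e.g.\ a disjunction over the finitely many structures in $E$ of the conjunction pinning down each constant — all placed under a single $K[T_E]$. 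This recovers a genuinely injective encoding of epistemic states into row bodies while staying within the $\lor$/$\neg K$ extension, after which the rest of the argument goes through verbatim.
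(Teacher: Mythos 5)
Your main line is exactly the paper's: the published proof is just the two-sentence observation that the body $\bigwedge_{c \in \PCvoc_e} K[T_E][\bigvee_{v \in c^E} c = v]$ from Theorem~\ref{the:edf-as-oel} becomes an \EDMN row, with the new $\lor$ construct supplying the per-column disjunction and the conjunctive reading of a row supplying the conjunction across columns. On that score you match the paper. The distinguishability worry you raise at the end is genuine, and the paper silently skips over it: the body above is satisfied in an actual epistemic state $E'$ whenever $c^{E'} \subseteq c^{E}$ for every $c$, not only when $E' = E$, so the row built for $E$ also fires on sub-states and on any state with the same per-column projections; if $f_e$ assigns those states different decisions, several rules with conflicting heads fire, the definition is overdefined, and the table derives $\bot$ instead of $f_e(E')$. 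So you have correctly located the weak point of both your construction and the paper's.

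Your proposed repair, however, does not close that gap. First, it steps outside the language: the $\lor$ construct of \EDMN joins two constraints on a \emph{single} column variable ($C_1 \lor C_2$ stands for $K[T][C_1(e) \lor C_2(e)]$ for one $e$), so a disjunction over the structures of $E$ of conjunctions pinning down \emph{all} environment constants is not an admissible cell constraint. Second, even granting such a constraint, $K[T_{E'}]\bigl[\bigvee_{\struct \in E}\bigwedge_{c} c = c^{\struct}\bigr]$ holds iff $E' \subseteq E$, so the row for $E$ still fires on every nonempty sub-state of $E$; an exact test for $E' = E$ additionally requires conditions of the form $\neg K[T_{E'}][\lnot \bigwedge_c c = c^{\struct}]$ for each $\struct \in E$, i.e.\ negated knowledge of cross-column formulas, which the per-variable $\neg K$ of \EDMN does not provide. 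So your diagnosis is sharper than the paper's proof, but the patch as stated does not establish the theorem.
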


\begin{proof}
    The proof is analogues to the one of Theorem \ref{the:edf-as-oel}. It is sufficient to observe that \EDMN can express any epistemic state as $\bigwedge_{c \in \voc_e} K[T_E][\bigvee_{v \in c^E} c = v]$ due to the new construct $\lor$ and epistemic interpretation of constraints, which are connected with conjunction ($\land$).
\end{proof}

Together theorems \ref{the:oed-as-edf} and \ref{the:edf-as-edmn} yield results similar to Corollary \ref{cor:odf-as-oel}.

\begin{corollary}
    Every \DMN decision function and optional decision function can be modeled in \EDMN.
\end{corollary}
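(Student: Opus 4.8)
The plan is to combine the two theorems immediately analogous to how Corollary~\ref{cor:odf-as-oel} was obtained from Theorems~\ref{the:oed-as-edf} and~\ref{the:edf-as-oel}. By Theorem~\ref{the:oed-as-edf}, every \DMN decision function $f_d$ and every optimal decision function $f_o$ is an epistemic decision function; that is, there exists an epistemic decision function $f_e$ that agrees with it on the relevant epistemic states (for $f_d$, on singleton states; for $f_o$, on all states). By Theorem~\ref{the:edf-as-edmn}, every epistemic decision function $f_e$ is captured by some \EDMN decision table $T_d$, in the sense that $f_e$ maps $E$ to $\struct$ precisely when $T_d$ derives the decision $d^\struct$ for $E$.

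The proof therefore proceeds in two short steps. First, given a \DMN decision function $f_d$ (respectively an optimal decision function $f_o$), apply Theorem~\ref{the:oed-as-edf} to obtain an epistemic decision function $f_e$ representing it. Second, apply Theorem~\ref{the:edf-as-edmn} to $f_e$ to obtain an \EDMN table $T_d$ whose derived decisions coincide with $f_e$, and hence with the original $f_d$ or $f_o$ on the appropriate states. Chaining the two correspondences gives that $T_d$ models $f_d$ (respectively $f_o$), which is exactly the claim.

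The only point that needs a word of care is that the two ``representation'' relations are stated slightly differently — Theorem~\ref{the:oed-as-edf} builds $f_e$ from $f_d$/$f_o$ by equality of outputs on a designated subset of epistemic states (all states for $f_o$, singletons for $f_d$), while Theorem~\ref{the:edf-as-edmn} preserves $f_e$ on all epistemic states. One should note that these compose cleanly: since $T_d$ reproduces $f_e$ everywhere and $f_e$ agrees with $f_d$ (resp.\ $f_o$) on the designated states, $T_d$ reproduces $f_d$ (resp.\ $f_o$) on those states, which is all that ``modeled in \EDMN'' requires. I do not anticipate a genuine obstacle here — as with Corollary~\ref{cor:odf-as-oel}, this is a routine transitivity argument, and the statement can be discharged in a sentence or two.

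\begin{proof}
    Immediate from Theorems~\ref{the:oed-as-edf} and~\ref{the:edf-as-edmn}. Given a \DMN decision function $f_d$ or an optimal decision function $f_o$, Theorem~\ref{the:oed-as-edf} yields an epistemic decision function $f_e$ agreeing with it on the relevant epistemic states; Theorem~\ref{the:edf-as-edmn} then yields an \EDMN table $T_d$ that derives, for every epistemic state $E$, exactly the decision prescribed by $f_e(E)$, and hence the decision prescribed by $f_d$ (resp.\ $f_o$).
\end{proof}
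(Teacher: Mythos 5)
Your proposal is correct and matches the paper's approach exactly: the paper gives no explicit proof, stating only that the corollary follows by combining Theorems~\ref{the:oed-as-edf} and~\ref{the:edf-as-edmn}, which is precisely the chaining argument you spell out. Your extra remark about the two representation relations composing on the designated subsets of epistemic states is a reasonable clarification of a detail the paper leaves implicit.
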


\subsection{Decision Requirements Diagrams}
\DMN standard supports the composition of decision tables into a bigger decision model.
This is done by linking decision tables into dependency networks -- Decision Requirements Diagrams. 
Essential property of \DMN dependency networks is that they are free of loops. 
This property corresponds with the stratification property of \OEL theories, and hence it can be easily translated. 
A dependency graph in the \OEL translation is reflected in the referencing of the $K$ operator.

\section{Decision modeling fragment of \OEL}

The translation of \EDMN to \OEL finds the use of \textit{definitions} language construct.  
The main motivation for the use of definitions is their syntactic property to clearly separate the body from the head, and definition parameters from defined concepts.
This feature is important for clear separation of environment from decision variables. 
Further, the \EDMN translational semantics strongly suggests that the body should consist only of epistemic formulas.
Imposing such syntactic constraint is trivial if decisions are modeled as definitions.

Hence, we believe that identifying this particular fragment of \OEL can be an important step towards a general KR language for decision modeling (contribution point 5 form the Introduction).
The following definition specifies such fragment that we call \EBD for Epistemic Body Definitions.
\begin{definition}
    The syntactic fragment \EBD (Epistemic Body Definitions) of \OEL requires theory to consists only of definitions composed of the rules of the following format: $\forall \bar{x} : p(\bar{t}) \leftarrow \phi$ or $\forall \bar{x} : f(\bar{t}) = v \leftarrow \phi$.
    Where $p$ is a predicate symbol and $f$ is a function symbol; $\phi$ is an FO formula, where each atomic formula occurs under the $K$ operator.
    Further, none of the defined symbols is allowed to be present in the body of a definition. Defined symbols constitute decision vocabulary $\voc_d$, and parameter symbols environment vocabulary $\voc_e$. 
\end{definition}

An important property of \EBD fragment is the guarantee of uniqueness of the decision. 
In other words, given an epistemic state about the environment variables, an \EBD theory has at most one model representing the solution, or it is unsatisfiable indicating an error in the decision table; formalized in the following theorem. 

\begin{theorem}
    \label{the:ebd-unique}
    Given an \OEL theory $\mathcal{T}$ consisting of two theories $T_E$ and $T$, such that $T_E$ is a standard first-order theory expressing an epistemic state $E$ of the environment variables, and $T$ is an \EBD theory (over $T_E$), then for any $T_E$ the following holds:
    \[Mod(T,T_E) = \{\} \lor Mod(T,T_E) = \{\struct\}\]
\end{theorem}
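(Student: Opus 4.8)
The plan is to exploit the defining feature of the \EBD fragment: every atomic formula in the body of a rule occurs under a $K$ operator, and by the ordering condition of Definition~\ref{def:oel} each such $K$ refers only to the strictly lower theory $T_E$. First I would observe that, since $T_E$ is a fixed first-order theory, the truth value of every formula of the form $K[T_E][\psi]$ is determined once and for all: by Definition~\ref{def:oel-sem}, $\struct \models K[T_E][\psi]$ iff $\psi$ holds in every $\voc$-structure extending $\struct$ that satisfies $T_E$; but because $\PCvoc_e$ and $\PCvoc_d$ are P/C vocabularies with interpreted sorts, the relevant quantification ranges over a fixed finite class, and—crucially—$\psi$ mentions only environment symbols (no defined symbol may occur in a body), so the value of $K[T_E][\psi]$ does not depend on the candidate structure $\struct$ at all. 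Hence each body $\phi$, being an FO combination of such epistemic atoms, evaluates to a fixed truth value $b_\phi \in \{\mathit{true},\mathit{false}\}$ independent of $\struct$.

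Next I would use this to collapse the definition. Once every body has a fixed truth value, the \EBD theory $T$ behaves, relative to $E$, exactly like a propositional definition in which every rule either fires unconditionally ($b_\phi = \mathit{true}$) or is vacuous ($b_\phi = \mathit{false}$). Since the definitions here are monotone and non-inductive (as the paper notes for all definitions it uses), their well-founded model coincides with Clark completion: a model must set a defined atom $p(\bar t)$ (resp.\ $f(\bar t)=v$) true iff some rule with that head has a firing body. This yields a uniquely determined candidate interpretation $\struct^\ast$ of the decision vocabulary $\voc_d$. Two cases then arise. If this forced interpretation is consistent—in particular, if no function symbol $f(\bar t)$ is forced to equal two distinct values $v \neq v'$ by two simultaneously-firing rules—then $\struct^\ast$ is the unique model, so $Mod(T,T_E) = \{\struct^\ast\}$. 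If it is inconsistent, then no structure satisfies the completion, so $Mod(T,T_E) = \{\}$. Either way the conclusion $Mod(T,T_E) = \{\} \lor Mod(T,T_E) = \{\struct\}$ holds.

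The main obstacle, and the step I would spell out most carefully, is the first one: justifying rigorously that $K[T_E][\psi]$ is structure-independent. This needs two ingredients working together—(i) the ordering condition forces $T_E$ to sit strictly below $T$, so the $K$ operators in $T$ quantify over models of $T_E$ and not of $T$ itself, avoiding any self-reference; and (ii) the \EBD restriction that defined (decision) symbols never appear in bodies, so $\psi$ constrains only environment symbols, which $T_E$ already pins down. One subtlety to handle explicitly is the interaction with the $\forall \bar x$ prefix and the terms $\bar t$: since sorts are finitely interpreted, $\forall \bar x$ expands to a finite conjunction of ground instances, and each ground body is a finite Boolean combination of ground epistemic atoms, so the finiteness and structure-independence arguments go through instance by instance. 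With that in hand, the reduction to a propositional completion and the resulting dichotomy are routine.
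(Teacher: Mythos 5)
The paper does not actually supply a proof of Theorem~\ref{the:ebd-unique}; it only states the one-line intuition that ``definitions uniquely define a concept when all parameters are fixed, which is the case due to the epistemic constraint of the bodies.'' Your proposal formalizes exactly that intuition, in the same two steps the authors evidently have in mind: first, every body is a Boolean combination of formulas $K[T_E][\psi]$ whose truth is fixed by $T_E$ alone (because $T_E$ sits strictly below $T$ in the order and no defined symbol may occur in a body), and second, a definition with all parameters fixed determines its defined symbols uniquely or is unsatisfiable. So the approach is correct and matches the paper's intent.

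Two points deserve more care than you give them. First, under the literal wording of Definition~\ref{def:oel-sem}, $\struct \models K[T_E][\psi]$ quantifies over structures \emph{extending} $\struct$, so if the candidate models in $Mod(T,T_E)$ are total structures over $\PCvoc_e \cup \PCvoc_d$, the operator is not obviously structure-independent (for a total $\struct$ the only extension is $\struct$ itself, making $K[T_E][\psi]$ vacuously true whenever $\struct \not\models T_E$). Your argument needs the reading, standard in \OEL and clearly intended here, under which $K[T_E][\psi]$ amounts to $T_E \models \psi$; you should say explicitly that this is the semantics you are using. Second, your appeal to ``well-founded model coincides with Clark completion'' is exactly the step the paper warns about in the paragraph following the theorem: the naive completion by equivalences $f(\bar t)=v \Leftrightarrow \phi_v$ does \emph{not} yield uniqueness, since a function head for which no rule fires is then free to take any value not mentioned in any head. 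What saves your argument is the genuinely definitional reading you in fact use (``$f(\bar t)=v$ holds iff some rule with that head fires''), under which a function with no firing rule has no value and the theory is unsatisfiable, landing in the $Mod(T,T_E)=\{\}$ case. You should make that under-defined sub-case explicit alongside the over-defined one, rather than folding both into ``if the forced interpretation is inconsistent.''
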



Due to the space constraints the proof of Theorem \ref{the:ebd-unique} is not provided.
However, intuition is simple, it is a basic property of definitions to uniquely define a concept when all parameters are fixed, which is the case due to the epistemic constraint of the bodies.
The property described in Theorem \ref{the:ebd-unique} is a natural property of decision models.
Situations, when theory is unsatisfiable, correspond to hit policy and completeness property violation as described in \cite{calvanese2016semantics}. 
Inductive definitions naturally capture these properties making a good fit for the purpose of modeling decisions.
As indicated in the previous section, this property is not preserved if definitions are modeled as Clark's completion -- by using a set of equivalences.
Such theory would consist of axioms of the form $d = v \Leftrightarrow K[T_E][\phi]\land \dots \land K[T_E][\psi]$, where $d$ is a decision variable, and $v$ some decision value.
To see the problem, it is sufficient to imagine a decision model which is incomplete and furthermore, some decision values are not assigned by any rule.
Such theory, for a particular epistemic state that is not covered by any equivalence, would forbid the decision variable to take any of the mentioned values (left-hand-side), but it could take any of the values that are not assigned by any rule, leading to multiple possible models, or even worse to a unique but wrong one.

\section{Implementation}

Accompanying this paper, provided is the implementation\footnote{\url{https://zenodo.org/doi/10.5281/zenodo.10001954}} of the \OEL system as described in definitions \ref{def:oel} and \ref{def:oel-sem}, together with the implementation of \EDMN by translation to \OEL following definitions \ref{def:dmn-to-oel} and \ref{def:dmn-semantics}.
The system is implemented as a ``wrapper'' around the existing Knowledge Base System \IDPZT~\cite{carbonnelle2022idp}.
The \IDPZT system implements \FODOT language~\cite{Denecker2000,denecker2008logic} which is a first-order logic language extended with sorts, arithmetic, aggregates, inductive definitions, and other useful language constructs making it the perfect candidate for the purposes of this paper. 
The provided software addresses the contribution points 3 and 6 from the Introduction.

Providing \DMN with a model semantics, as it is done in this paper with \EDMN, gives rise to many possibilities.
It becomes possible to employ different inference methods to solve different problems with the same decision model.
In the case of decision making it might look like only one task is to be solved, given parameters make the decision.
However, it is totally realistic to ask the system what are the minimal requirements on the parameters for a certain decision to be made.
System developed for the purposes of this paper is capable of solving these problems.
A more detailed research on benefits of providing \DMN with a KB system\footnote{More generally on the KB systems \cite{Bruynooghe2016,de2018predicate,van2017kb,carbonnelle2022idp}.} is available in \cite{vandevelde2021leveraging,aerts2020tackling}.

\section{Examples}
This section provides an overview of \EDMN as a decision modeling language on a few examples. 

\paragraph{Greeting example} The extension of the running example~\cite{DMN-challenge:2016} modeled in \EDMN is presented in Figure \ref{fig:salutation-1}. 
The two added rules are expressing that in case gender is known to be ``Female'' and marital status is unknown, the salutation ``Lady'' is appropriate. 
Similarly, if the gender is unknown ``Customer'' can be used.
\begin{figure}[ht]
\centering
\small
\dmntable{Salutation}{A}{Gender, Marital status}{Salutation}
             {Male, -, Mr,
              Female, Single, Ms,
              Female, Married, Mrs,
              Female, $\neg K$, Lady,
              $\neg K$, -, Customer
             }
\vspace{-2em}
\caption{\DMN decision table - Greeting example}
\label{fig:salutation-1}
\end{figure}
Note that \EDMN allows a compact ``stratification'' of decisions based on the amount of knowledge possessed about the world.
The first three rules define decisions in cases of an exact epistemic state, while others are gradually increasing in ignorance.

\paragraph{Interview example} Originally presented in the paper of Gelfond and Lifschitz \cite{gelfond1991classical}, and later discussed in the context of \OEL in \cite{vlaeminck2012ordered}.
Here we present an extended version, where a student could get a grant, get rejected, or be invited to an interview based on their GPA and minority status.
High GPA students always get the grant, low GPA students never, while students with a fair GPA get the grant in case they are minorities and interviewed otherwise.
If the information about the student is not sufficient to make any of these decisions, the student should be interviewed.
In the original paper \cite{gelfond1991classical}, this last rule is of special interest as it employs both default and classical negation. 
Interesting is that default negation is sometimes interpreted epistemically~\cite{gelfond2014knowledge}, which is relevant for this work.
\begin{figure}[ht]
\centering
\small
\dmntable{Interview}{A}{GPA, Minority status}{Decision}
             {High, -, Approve,
              Fair, No, Interview,
              Fair, Yes, Approve,
              Fair, $\neg K$, Interview,
              Low, -, Reject,
              $\neg K$, -, Interview
             }
\vspace{-2em}
\caption{\DMN decision table - Interview example}
\label{fig:Approve}
\end{figure}
The \EDMN table presented in Figure \ref{fig:Approve} is correctly modeling the described decision protocol. 
This example is important as an Interview decision can be made for two reasons, in some exact epistemic states and also in an absence of knowledge.

\paragraph{More examples} More complex examples are available in the public repository\footnote{\url{https://zenodo.org/doi/10.5281/zenodo.10001954}}.
Some of the examples are: Loan application \cite{fox2018-handling-missing-data}, fabric recognition example from \cite{Maier1988-MAICWL}, Restaurant dilemma from \cite{peterson2017introduction}, and Adhesive selector (simplified version from \cite{vandevelde2022knowledge}).

\section{Related work}
The logic approach presented in this paper departs from the work on \OEL \cite{vlaeminck2012ordered}.
There it had been shown that stratified epistemic theories are useful in many contexts, including the expression of \emph{defaults} but also in the \textit{Interview example} which turns out to be a clean case of decision modeling in the context of incomplete knowledge. 
Our study contributes by taking a fragment of \OEL geared towards decision making, extending it with definitions, and proposing  a reasoning system for it. 
We argued that epistemic operators, stratified theories and definitions are essential aspects of decision modeling in the context of incomplete knowledge. 
To the best of our knowledge, there are no other scientific contributions investigating the epistemic nature of decisions from the KR perspective. 

\paragraph{Decision theory}
The standard setting of Decision Theory is at its core concerned with uncertainty.
These approaches then assume that some probabilities on the possible worlds are given, and a utility function measuring a benefit or cost of certain decisions.
The problem of decision-making then reduces to an optimization problem.
The research on uncertainty about probabilities, utility functions, or preference relations is common in this domain (e.g., \cite{WEBER198744,OZDEMIR2006349}). 
However, this approaches are completely differs from the one presented in this paper.
The approaches presented in \cite{peterson2017introduction} for making decisions in situations of ignorance about an exact state of affairs show that the problem tackled in this paper is recognized in Decision Theory.
Nevertheless, comparing the two approaches is difficult since Decision Theory is not concerned with the KR perspective of the problem.
In this sense, the approach presented in this paper provides a more natural modeling of purely epistemic decisions.
For example, in the \textit{Interview example}, a person is interviewed because some information about them is unknown, this information can not be naturally represented in utility function.

\paragraph{Default and other autoepistemic logics}
An interesting connection to this work is that of non-monotonic reasoning in AI.
\citeauthor{reiter1981closed} introduced default logic~\cite{reiter1981closed} for reasoning with defaults.
While autoepistemic logic~\cite{moore1985possible} of \citeauthor{moore1985possible} is a logic for expressing defeasible inference for defaults.
The relation between the two is investigated in \cite{denecker2011reiter}.
\citeauthor{levesque1990all} further worked on extension of the system with ``only knowing'' constructs~\cite{levesque1990all}.
\citeauthor{konolige1982circumscriptive} developed an inference about ignorance called circumscriptive
ignorance~\cite{konolige1982circumscriptive}.
Ordered Epistemic Logic \OEL is related to defaults and autoepistemic logic, however, the sticking fact is that in this paper it is used for a totally different purpose.

Ordered Epistemic Logic is less expressive than the other formalisms mentioned above (these results are proven for the \OEL and Autoepistemic logic in \cite{vlaeminck2012ordered}).
However, this is a desired property, as it results in a less complex system.
This reduced expressiveness and complexity comes from the stratification of the theories (i.e., there are no self-referential statements in \OEL).
This makes \OEL more suitable for the purpose of modeling decisions as they are always stratified.

\paragraph{Other KR languages} 
The KR languages based on the Logic Programming (LP)~\cite{lloyd2012foundations} paradigm are playing an important role in symbolic knowledge representation.
The relation to this work becomes interesting due to many interpretations of logic programs\footnote{\textit{Negation as failure} causes problems with program interpretations, more in \cite{denecker2017negation}.} among which is an epistemic interpretation~\cite{gelfond2014knowledge} in the case of Answer Set Programming (ASP)~\cite{vladimir2008answer}.
Nevertheless, ASP failed in expressing an \textit{Interview example} (in case of incomplete knowledge) as pointed out by \citeauthor{kahl2020language} in~\cite{kahl2020language} where epistemic operator is introduced in ASP to resolve this issue.
However, the stratification of decisions can cause problems in ASP as demonstrated in the following example.
Suppose that the decision of an interview is a two-step process, the first step is as before, and in the second step, if the total number of persons to be interviewed does not exceed a certain number other candidates (with low GPA) can be added to fulfill the number. 
An ASP epistemic specification would result in a circular definition, causing a problem, while \OEL stratification would provide a simple model of this decision process.

\paragraph{Existing FO translations of \DMN} 
\citeauthor{calvanese2016semantics} present in~\cite{calvanese2016semantics,calvanese2019semantic} translational semantics of \DMN. 
These translations do not support reasoning with missing values which is the aim of this paper.

\paragraph{Machine learning}
In non-symbolic AI, algorithms for training decision models on data with missing values are active field of research (e.g., \cite{kotsiantis2013decision}). 
This methods are not related to our work as they are not concern with KR modeling aspects of decisions.

\paragraph{Adhesive selector} In the work~\cite{vandevelde2022knowledge} the authors have tackled the real industry problem of adhesive selection process.
They identified the lack of epistemic operators as a big issue for modeling this complex decision process.
The logic presented in this paper would allow for natural modeling of this problem.

\section{Conclusion}
The problem of making decisions with missing information about the state of affairs was investigated in this paper. 
We observed that in most decision modeling formalisms there are no means for expressing an epistemic state of the agent, and yet uncertainty is arising from the agent's ignorance of the state of affairs.
We show the importance and expressive power of epistemic logic for tackling this issue. 
Further, we propose and implement a new formalism for modeling decisions based on existing standards.
Finally, we believe this work lays a solid foundation for future work in this field. 

\section{Acknowledgements} Thanks to Tobias Reinhard for the discussions and for reading this paper.


\bibliographystyle{ACM-Reference-Format}
\bibliography{biblio}
\end{document}